%% file: example_paper.tex
\documentclass{article}

\usepackage[accepted]{icml2026}
\usepackage{microtype}
\usepackage{graphicx}
\usepackage{subcaption}
\usepackage{booktabs}
\usepackage[citecolor=blue,linkcolor=blue,urlcolor=blue]{hyperref}
\usepackage{amsmath}
\usepackage{amssymb}
\usepackage{mathtools}
\usepackage{amsthm}
\usepackage{tabularx}
\usepackage{varwidth}
\usepackage{bbm}
\usepackage{bm}
\usepackage{dsfont}
\usepackage{colortbl}
\usepackage[clock]{ifsym}
\usepackage{enumitem}
\usepackage{multicol}
\usepackage{algorithm}
\usepackage{algorithmic}
\usepackage{wrapfig}
\usepackage{multirow}
\usepackage{array}
\usepackage{thm-restate}
\usepackage[dvipsnames]{xcolor}
\usepackage[capitalize,noabbrev]{cleveref}
\usepackage{comment}
\usepackage{tikz}
\usetikzlibrary{arrows.meta,shapes,decorations.markings}

\definecolor{sexybrown}{HTML}{8B4513}
\definecolor{pierCite}{HTML}{00008B}

\newcolumntype{C}[1]{>{\centering\arraybackslash}m{#1}}
\newcolumntype{R}[1]{>{\raggedright\arraybackslash}m{#1}}

\Crefname{equation}{Eq.}{Eqs.}
\Crefname{figure}{Fig.}{Figs.}
\Crefname{table}{Tab.}{Tabs.}
\Crefname{theorem}{Thm.}{Thms.}
\Crefname{lemma}{Lem.}{Lems.}
\Crefname{proposition}{Prop.}{Props.}
\Crefname{definition}{Def.}{Defs.}
\Crefname{algorithm}{Alg.}{Algs.}
\Crefname{corollary}{Cor.}{Cors.}
\Crefname{section}{Sec.}{Secs.}
\Crefname{condition}{Cond.}{Conds.}

\newtheorem{theorem}{Theorem}
\newtheorem{corollary}{Corollary}
\newtheorem{proposition}{Proposition}

\theoremstyle{definition}
\newtheorem{definition}{Definition}

\newcommand{\I}{\mathbbm{1}}

\newcommand{\G}{\mathcal{G}}
\newcommand{\M}{\mathcal{M}}

\def\*#1{\mathbf{#1}}
\def\1#1{\mathcal{#1}}
\def\2#1{\mathscr{#1}}
\def\3#1{\mathbb{#1}}

\DeclareBoldMathCommand{\u}{u}
\DeclareBoldMathCommand{\U}{U}
\DeclareBoldMathCommand{\v}{v}
\DeclareBoldMathCommand{\V}{V}
\DeclareBoldMathCommand{\x}{x}
\DeclareBoldMathCommand{\X}{X}
\DeclareBoldMathCommand{\y}{y}
\DeclareBoldMathCommand{\Y}{Y}
\DeclareBoldMathCommand{\z}{z}
\DeclareBoldMathCommand{\Z}{Z}
\DeclareBoldMathCommand{\c}{c}
\DeclareBoldMathCommand{\C}{C}
\DeclareBoldMathCommand{\r}{r}
\DeclareBoldMathCommand{\R}{R}
\DeclareBoldMathCommand{\s}{s}
\DeclareBoldMathCommand{\S}{S}
\DeclareBoldMathCommand{\w}{w}
\DeclareBoldMathCommand{\W}{W}
\DeclareBoldMathCommand{\t}{t}
\DeclareBoldMathCommand{\T}{T}
\DeclareBoldMathCommand{\A}{A}
\DeclareBoldMathCommand{\a}{a}
\DeclareBoldMathCommand{\B}{B}
\DeclareBoldMathCommand{\b}{b}
\DeclareBoldMathCommand{\d}{d}

\newcommand{\supp}{\mathrm{supp}}

\newcommand{\PA}{\mathbf{Pa}}

\tikzstyle{SCM}=[>={Stealth},
  every node/.style={inner sep=0.25em, transform shape},
  conf-path/.style={<->,dashed,every edge/.append style={in=100,out=80}},
  circle-path/.style={draw \circle ->},
  removed/.style={opacity=0.2},
  cut/.style={color=Red!50},
  sel-node/.style={rectangle,inner sep=0.15em, draw, color=cyan, fill=white},
  sig-node/.style={rectangle,inner sep=0.3em, draw, color=Green}
]

\icmltitlerunning{Foundation Models for Partial Causal Identification}

\begin{document}

\twocolumn[
\icmltitle{Foundation Models for Partial Causal Identification}

\icmlsetsymbol{equal}{*}

\begin{icmlauthorlist}
  \icmlauthor{Alexis Bellot}{independent}
  \icmlauthor{Anish Dhir}{ucl}
  %\icmlauthor{}{sch}
  %\icmlauthor{}{sch}
\end{icmlauthorlist}

\icmlaffiliation{independent}{Independent}
\icmlaffiliation{ucl}{University College London}

\icmlcorrespondingauthor{Alexis Bellot}{abellotolivan@gmail.com}

\icmlkeywords{causal inference, partial identification, prior-data fitted networks, foundation models}

\vskip 0.3in
]

\printAffiliationsAndNotice{}

\input{00_abstract}
\input{01_introduction}
\input{03_universal}
\input{04_consistency}
\input{05_experiments}
\input{06_conclusion}

\bibliography{bibliography}
\bibliographystyle{icml2026}

\newpage
\appendix
\onecolumn
\input{A_related}
\newpage
\input{A_proofs}
\newpage
\input{A_additional}
\newpage
\input{A_architecture}

\end{document}

%% file: 00_abstract.tex
\begin{abstract}
    This paper investigates the development of causal foundation models for bounding the effect of interventions and counterfactuals from observational data. We show that a canonical prior can be defined with full support over the space of structural causal models with discrete observables. With this canonical prior, we translate the problem of bounding counterfactuals into that of learning distributions over functions that map data (and possibly structural assumptions) to a causal query of interest. This extends the promising causal foundational modelling paradigm to the estimation of \textit{partially-identifiable} causal effects, i.e., under unobserved confounding, where multiple values are equally compatible with the observed data and prior structural assumptions.
\end{abstract}
% (and therefore also, e.g., over their implied data distributions and induced causal diagrams)

%% file: 01_introduction.tex
\section{Introduction}
In the background of any investigation into causal inference are concerns about \emph{identifiability}: roughly, that the causal query may not be uniquely computable from the available data even in the large sample limit. It is commonly acknowledged that a relatively detailed understanding of the data-generating process is necessary for identification \cite{pearl2009causality,tian2002general,bareinboim2022pearl}.

In practice, it is natural to ask for results and methods that are more ``universally'' applicable and do not require having access to qualitative knowledge of the system's structure. In cases where the query is not uniquely computable we might ask: what is the tightest possible bound that contains all values consistent with the observed data and additional assumptions we might pose? This line of research falls under the rubric of \textit{partial causal identification} \citep{manski1990nonparametric,balke1994bounds,tian2000probabilities,finkelstein2020bounds,zhang2021continuous,zhang2022partial,sachs2023general,duarte2024automated,jung2026information}. This generalizes the causal estimation problem to the identification of a set and has found applications in econometrics \cite{tamer2010partial}, epidemiology \cite{mullahy2021embracing}, and AI \cite{jalaldoust2024partial,li2025confounding,joshi2024towards,bellot2023transportability}.
%The problem of partial identification has a rich history spanning analytical and computational results for many instances.

Despite considerable progress, the existing toolbox for partial identification is highly fragmented. Each instance of the problem has historically required a bespoke derivation: specific causal structures, queries, and types of input datasets (e.g. observational vs interventional), require separate optimization runs. On the other hand, Prior-Data Fitted Networks \citep{garnelo2018neural,muller2022transformers} are establishing a powerful paradigm for learning distributions over functions between two general spaces, such as the space of datasets and causal queries.
This idea has led to the development of causal foundation models \citep{robertson2025dopfn,balazadeh2025causalpfn,dhir2025mace,reuter2026partial,ma2025causalfm,bynum2025black}, mapping data to the value of a causal quantity by training on pairs of synthetically-generated examples. Most causal foundation models operate exclusively in settings where the query of interest is \emph{point-identifiable}. \citet{dhir2025mace,reuter2026partial} apply this idea to partially identified regimes, producing a posterior distribution that does not necessarily concentrate around a single value. However, one cannot necessarily derive valid bounds from their posteriors in a principled manner because their prior is not guaranteed to give mass to \emph{all} values of the query consistent with the data.

In this paper, we develop a \textit{foundation model for partial identification} that, given an observational dataset, outputs a posterior distribution over a causal query whose support provably encodes the identified set. Our distinctive contribution is the definition of a \emph{canonical prior} with full support over \emph{all} SCMs with discretely-valued observables. This enables us to derive bounds on any causal query from any observational dataset (and possibly other prior knowledge, like a causal diagram) that are guaranteed to be valid and tight asymptotically as the dataset size grows.

\subsection{Preliminaries}
\label{sec:prelim}
The basic framework of our analysis rests on \textit{structural causal models} (SCMs) \citep[Def. 7.1.1]{pearl2009causality}, which formalize the notion of a data-generating process. In addition to specifying the distribution of data, these models also specify the generative mechanisms that produce them. For the purpose of causal inference and learning, SCMs provide a broad, fine-grained hypothesis space.

An SCM $M$ is a tuple $\langle \V, \U, \1F, P \rangle$ where $\V$ is a set of endogenous variables and $\U$ is a set of exogenous variables. $\1F$ is a set of functions where each $f_V \in \1F$ decides values of an endogenous variable $V \in \V$ taking as argument a combination of other variables in the system. That is, $V \leftarrow f_{V}(\*\PA_V, \U_V)$ where observed parents $\*\PA_V \subseteq \V$ and unobserved parents $\U_V \subseteq \U$. An intervention $do(x)$ replaces the causal mechanism $f_{X}$ with the assignment $X \leftarrow x$ corresponding to a sub-model $\M_x$. We will use the counterfactual notation $Y_x(\u)$ to denote the value of the outcome $Y$ in $\M_x$ given $\U=\u$. Drawing values of exogenous variables $\U$ following the distribution $P(\U)$ induces a counterfactual variable $Y_x$. Specifically, the event $\Y_\x = \y$ (for short, $\y_\x$) can be read as ``$\Y$ would be $\y$ had $\X$ been $\x$''. 

With a given SCM $M$, for subsets $Y, \dots ,\Z, \X, \dots, \W \subseteq \V$, the distribution over counterfactuals $\Y_\x, \dots ,\Z_\w$ is defined as:
\begin{align}
\label{eq:counterfactual_probability}
    P_M&(\y_\x, \dots,  \w_\z) = \\
    &\int \I\{\Y_\x(\u) = \y, \dots,  \W_\z(\u) = \w \} \ dP(\u).\nonumber
\end{align}
Given an observational distributions, the best bound on a counterfactual query is defined as follows.

\begin{definition}[Optimal Counterfactual Bound]
    \label{def:optimal_bound}
    For a distribution $P(\V)$, the optimal bound $[l, u]$ over a counterfactual probability $P(\y_\x, \ldots, \z_\w)$ is defined as, respectively,
    \begin{align}
        \label{eq:counterfactual_bounds}
        \min/\max_{M \in \3M}& \quad P_M(\y_\x, \ldots, \z_\w) \\
        &\text{s.t.}\qquad P_M(\V) = P(\V), \nonumber
    \end{align}
    where $P_M$ denotes the distribution induced by the SCM $M$.
\end{definition}

The problem formulation is presented in the language of counterfactuals to encompass any causal or counterfactual statement of interest, such as a causal effect $P(\y_\x)$\footnote{This definition can be extended to a collection of observational and experimental distributions, and additional constraints over the space of SCMs $\3M$.}. We will write $\mathcal{I}_{\Psi}(P_M)$ or just $\mathcal{I}$ for the true identified set.

Each SCM $M$ can also be associated with a \textit{causal diagram} $\G$, where nodes represent endogenous variables $\V$, directed edges represent the arguments $\*\PA_V$ of each function $f_{V}$, and bi-directed edges represent the presence of a common latent variable in the arguments of two functions. We will leverage a special type of clustering of nodes in $\G$ called the \emph{confounded component} (or $c$-component for short) from \cite{tian2002general}. For a causal graph $\G$, a subset $\C \subseteq \V$ is a $c$-component if any pair $V_i, V_j \in \C$ is connected by a bi-directed path in $\G$. Given a diagram $\G$, we write $\3C(U)$ for the $c$-component of $U$.

Throughout this paper, we assume that domains of endogenous variables $\V$ are \textbf{discrete and finite}; exogenous variables $\U$ could a priori take values in any (continuous) domain. The counterfactual probability $P(\Y_\x, \ldots, \Z_\w)$ defined in \Cref{eq:counterfactual_probability} is thus a categorical distribution. We use $\supp_V$ to denote the support of the distribution of $V \in \V$.

% We will use standard graph-theoretic abbreviations to represent graphical relationships, such as the set of parent nodes of $X$ in $\G$, denoted by $Pa_{\G}(X)$, or the ancestors of $X$ in $\G$, denoted by $An_{\G}(X)$ -- both excluding $X$. For a more detailed survey on SCMs, we refer to \cite{pearl2009causality}.

%% file: 03_universal.tex
\section{Universal Causal Estimation}
\label{sec:universal}
We work towards the development of a \emph{universal} solver for partial identification. Rather than targeting a specific causal diagram or a specific bound algorithm, a trained model implicitly solves the optimal bounding problem for any query derived from the SCM.

The idea is to bound the value of a counterfactual query $\Psi := P(\y_\x, \dots,  \w_\z)$ given an observational dataset $\1D \sim P(\V)$ by examining the pushforward induced by the posterior over SCMs. For any measurable set $A \subseteq [0,1]$ define
\begin{align}
    \Pi(\Psi \in A \mid \1D) = \int \I\{\Psi_M \in A\} \, d\Pi(M \mid \1D).
\end{align}
Optimal bounds could then be approximated by the posterior support bounds,
\begin{align}
    \inf / \sup \ \operatorname{supp}\Pi(\Psi \mid \1D).
\end{align}
These correspond to the $0$- and $1$-quantile limits. We use $\mathcal{I}_{\Pi(\cdot \mid \1D)}$ to denote the set of values defined by the posterior support. The shape of the posterior may depend on the prior, but $\mathcal{I}_{\Pi(\cdot \mid \1D)}$ should capture the full range of counterfactual values compatible with the observed data if priors have full support as the dataset size grows.

Following the paradigm of causal foundation models, we propose to learn a parametric approximation to the posterior directly from synthetic examples sampled from a prior over the space of SCMs $\Pi(M), M\in\3M$.

% Good priors are \emph{critical} for partial identification as they ensuree that the posterior $\Pi(\cdot \mid \1D)$ remains faithful to the data in the sense that no SCM consistent with the observed distribution (and additional assumptions if available) is assigned zero posterior mass a priori.

\subsection{Canonical Priors}

In general systems, the set of functions $\{f_V:V\in\V\}$ between variables and probability measures $P(\U)$ that define an SCM $M$ may be arbitrary. This makes it difficult to define good priors.

For discrete observables $\V$, however, any set of counterfactuals derived from an SCM $M$ can be equivalently obtained from a canonical SCM $N$ parameterized by categorical probability mass functions $P(U), U\in\U$ with a finite and fixed cardinality $|\supp_U|, U\in\U$. This parameterization is known as the family of \emph{canonical} SCMs proposed in \citep[Def. 2.3]{zhang2022partial}.

\begin{definition}[Canonical SCM]
    \label{def:canonical_scm}
    A canonical SCM $M$ is an SCM parameterized by categorical probability mass functions $P(U), U\in\U$ with a finite and fixed cardinality,
    \begin{align}
        |\supp_U| = \prod_{V \in \3C(U)} |\supp_{PA_V} \mapsto \supp_V|.
    \end{align}
    $|\supp_{PA_V} \mapsto \supp_V |$ denotes the number of functions from the (discretely-valued) parents of $V$ to the values of $V$. $\3C(U)$ denotes the $c$-component of $U$.
\end{definition}

For example, let $\1G:\{X \rightarrow Y, X \leftrightarrow Y\}$ with binary $X,Y \in \{0,1\}$. An SCM corresponding to this structure parameterized by a latent variable $U$ with cardinality $|\supp_U| = |\supp_X|\times |\supp_X \mapsto \supp_Y| = 2 \times 4 = 8$ is a canonical SCM.

The canonical parameterization is maximally expressive, that is, it can represent \emph{any} SCM $M$ with discrete observables $\V$ \citep[Thm. 2.4]{zhang2022partial}. This suggests a natural family of priors over SCMs that we call \emph{canonical priors}. 

\begin{definition}[Canonical Prior]
    \label{def:canonical_prior}
    Given a set of observables $\V$, a canonical prior $\Pi_0$ is a probability measure on the joint space of causal diagrams $\1G$ and the parameter space $\boldsymbol\theta = \{P(u): u \in \supp_U, U \in \U\}$ of the corresponding family of canonical SCMs.
\end{definition}

We consider a particular instance of the canonical prior that uses the Erd\H{o}s-R\'{e}nyi model for the causal diagram and the Dirichlet distribution for the exogenous parameters, as follows:
\begin{align}
    \label{eq:canonical_prior}
    &\Pi_0(\1G) = \text{Erd\H{o}sR\'{e}nyi}(\1G; q_D, q_B),\\
    &\Pi_0(\boldsymbol{\theta} \mid \1G) = \prod_{U \in \U} \mathrm{Dir}(\boldsymbol{\theta}_U; \alpha \mathbf{1}).\nonumber
\end{align}
$q_D, q_B \in (0,1)$ are the probabilities of including a directed and a bidirected edge, respectively, $\alpha > 0$ is a concentration parameter and $\boldsymbol{\theta}_U \in \Delta_{|\supp_U|-1}$ is the exogenous distribution for $U$.

Under this prior, every graph receives positive probability from the Bernoulli edge model and every categorical distribution $\boldsymbol{\theta}_U := \{P(u): u\in\supp_U\}, U\in \U$ receives positive probability from the Dirichlet distribution.

\begin{proposition}[Full Support]
    \label{prop:full_support}
    The canonical prior $\Pi_0$ in \Cref{eq:canonical_prior} has full support over $\3M$: for any SCM $M \in \3M$ and any open neighborhood $\mathcal{U}$ of $M$ in $\3M$, $\Pi_0(\mathcal{U}) > 0$.
\end{proposition}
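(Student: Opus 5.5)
We prove \Cref{prop:full_support} by reducing to the canonical parameterization and showing that the prior is a finite mixture of absolutely continuous measures whose densities are strictly positive on the interior of a product of simplices. First we fix the topology on $\3M$. Because the paper only distinguishes SCMs through the counterfactual distributions they induce, we identify each $M \in \3M$ with the family of all its counterfactual probabilities $P_M(\y_\x,\ldots,\w_\z)$. This family is a point in a finite-dimensional simplex, since endogenous domains are finite and there are finitely many counterfactual events over finitely many interventions. The neighborhoods of $M$ are then the preimages of open sets in this simplex. By \citep[Thm.~2.4]{zhang2022partial}, every $M$ is equivalent to some canonical SCM $N$ (\Cref{def:canonical_scm}) with diagram $\1G$ and exogenous parameters $\boldsymbol\theta^\ast = (\boldsymbol\theta^\ast_U)_{U\in\U}$. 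It therefore suffices to show that $\Pi_0$ puts positive mass on every neighborhood of $(\1G,\boldsymbol\theta^\ast)$ in canonical parameter space, and that such neighborhoods map into neighborhoods of $M$.

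The second point is a continuity step. In a canonical SCM each value $u$ of $U$ indexes a tuple of deterministic response functions for the variables in $\3C(U)$. Every counterfactual probability is therefore a finite sum of products $\prod_U \theta_{U,u_U}$ over the exogenous configurations that realize the event. This is a polynomial in $\boldsymbol\theta$, so the map $\boldsymbol\theta \mapsto (P_N(\cdot))$ is continuous on $\prod_U \Delta_{|\supp_U|-1}$. Consequently, for any open neighborhood $\mathcal{U}$ of $M$, there is an open set $O \ni \boldsymbol\theta^\ast$ in the product of simplices whose image under $(\1G,\cdot)$ lies in $\mathcal{U}$.

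The positivity step follows. We have $\Pi_0(\mathcal{U}) \ge \Pi_0(\1G)\,\Pi_0(O \mid \1G)$. Since $q_D,q_B\in(0,1)$ and the Erd\H{o}s-R\'enyi factor is a product of Bernoulli terms, $\Pi_0(\1G) = \prod q^{\cdot}(1-q)^{\cdot} > 0$. The Dirichlet$(\alpha\mathbf 1)$ law with $\alpha>0$ has a density that is strictly positive on the relative interior of each simplex. Its support is the whole closed simplex, including boundary points where $\boldsymbol\theta^\ast_U$ may put zero mass on some responses: any relatively open set of the simplex meets the interior in a set of positive Lebesgue measure. The prior over $\boldsymbol\theta$ is a product of independent Dirichlets, so the relatively open set $O$ contains a product of relatively open sets $O_U \ni \boldsymbol\theta^\ast_U$, each with positive Dirichlet mass. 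Multiplying these gives $\Pi_0(O\mid\1G)>0$.

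The main obstacle is the first step: making the topology on $\3M$ precise, so that a neighborhood of an arbitrary, possibly continuous-$\U$, SCM corresponds to a neighborhood of its canonical representative. We will handle this by committing to the counterfactual-distribution topology described above. Under that topology the canonical equivalence of \citep{zhang2022partial} is exact, so $M$ and $N$ have identical images and the continuity argument transfers directly. A smaller caveat arises if $\1G$ is taken to be the diagram of $M$ rather than a supergraph. Any supergraph with the same or coarser $c$-components also admits an equivalent canonical representative, so choosing the true diagram suffices.
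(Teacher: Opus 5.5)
Your proposal is correct and follows the same route as the paper's proof: you represent $M$ by a canonical SCM via \citet{zhang2022partial}, use the fact that all counterfactual quantities are polynomial in $\boldsymbol{\theta}$ to pull the neighborhood back to an open parameter set under the fixed diagram, and conclude from $\Pi_0(\mathcal{G})>0$ together with the full support of the product Dirichlet. You are somewhat more careful than the paper, which leaves implicit both the topology on $\mathbb{M}$ (your choice makes neighborhoods saturated under counterfactual equivalence, so $M$ and its canonical representative lie in the same open sets) and the handling of boundary parameters.
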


% \begin{proof}
% We check full support jointly over graphs and parameters. Every semi-Markov diagram $\1G$ on $p$ nodes has $\Pi_0(\1G) > 0$: its directed part is a DAG, so some topological ordering $\sigma$ is consistent with it, and the corresponding Bernoulli product is strictly positive since $q_D, 1-q_D, q_B, 1-q_B \in (0,1)$. For a fixed $\1G$, by \Cref{thm:canonical_representation} every canonical parameter $\theta \in \Theta_\1G$ defines a valid SCM (surjectivity) and all causal quantities are continuous polynomials of $\theta$, so open sets in $\3M$ map to open sets in $\Theta_\1G$. The Dirichlet prior has full support on each simplex factor, giving $\Pi_0(\theta(\mathcal{U}) \mid \1G) > 0$ for any non-empty open $\mathcal{U}$. Hence $\Pi_0(\1G, \mathcal{U}) = \Pi_0(\1G) \cdot \Pi_0(\mathcal{U} \mid \1G) > 0$.
% \end{proof}

Proofs are given in \Cref{app:proofs}. The key implication of \Cref{prop:full_support} is that with full support canonical priors we can guarantee that the posterior $\Pi_0(\cdot \mid \1D)$ remains faithful to the data in the sense that no SCM consistent with the observed distribution is assigned zero mass a priori.

\subsection{Prior-Data Fitted Networks}
To approximate this posterior distribution for arbitrary queries and data distributions we use a Prior-Data Fitted Network (PFN) \citep{muller2022transformers}. They are a class of neural processes \citep{garnelo2018neural} that meta-learn to approximate the Bayesian posterior predictive distribution with a parametric model $q_\omega(\Psi \mid \1D)$ directly from samples from the prior. 

%Rather than computing the posterior $P(M \mid \1D)$ over SCMs explicitly---which requires evaluating an intractable integral over $\3M$---a PFN learns a parametric approximation $q_\supp(\Psi \mid \1D)$ that targets the posterior predictive distribution in a single forward pass at test time.

Following \cite{robertson2025dopfn,dhir2025mace,reuter2026partial}, we adopt a transformer-based architecture with input embeddings designed to preserve variable values, and types (e.g., interventions, observations) as well as correlations between them. We then stack multiple transformer blocks with attention mechanisms that alternate at each layer between sample-wise attention, which shares information across samples, and feature-wise attention, which models dependencies among variables within each sample. The counterfactual query $\Psi$ is then decoded from the resulting representation through a linear head.

The model $q_\omega$ is then trained to predict a discrete distribution over the possible binned values of $\Psi \in [0,1]$ given the observational dataset $\1D_n$. The training data is given by batches of examples $(\1D_n, \Psi)$ generated by repeatedly drawing canonical SCMs $M \sim \Pi_0$ from the prior using \Cref{eq:canonical_prior}, generating an observational dataset $\1D_n \sim P_M$ by sampling $n$ i.i.d. observations, and computing the ground-truth counterfactual value $\Psi(M)$ from $M$. Architectural and training details are given in \Cref{app:architecture}.

\subsection{Extensions}

Existing partial identification methods exploit additional qualitative knowledge about the underlying data generating process, for example encoded in a causal diagram \cite{zhang2022partial} or equivalence class \citep{bellot2023towards}. One can readily condition on a partial or full specification of the causal diagram $\1G$ by training the PFN on tuples $(\1G, \1D_n, \Psi)$ derived from the SCM $M$. This has been shown to be tractable by \citep{reuter2026partial}. 

% Canonical parameterizations of SCMs have also been proposed in the context of transportability \citep{jalaldoust2024partial} and continuously-valued outcomes that may similarly be used to train PFNs for partial transportability and identification.

%The modular design of the framework admits several natural extensions. First, the architecture can be conditioned on a partial or full specification of the causal graph by concatenating a graph representation (e.g., the adjacency or ancestor matrix) with the dataset tokens before the attention layers, along the lines of \citet{reuter2026partial}. This allows the same trained model to exploit any available structural knowledge at test time without retraining. Second, the training data can include not just observational samples but also samples from experimental distributions $P(\V_z)$ for interventional regimes $z \in \3Z$, tightening the identified set when such data are available. Third, the framework applies directly to multi-query settings where several counterfactual quantities $\Psi_1, \ldots, \Psi_k$ are of simultaneous interest, by predicting their joint posterior distribution.

%% file: 04_consistency.tex
\begin{figure*}[t]
    \centering
    \includegraphics[width=1\linewidth]{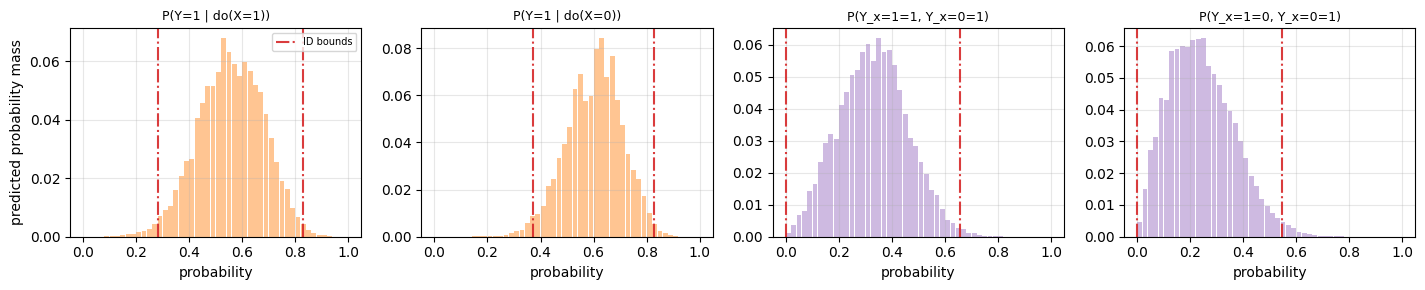}
    \caption{
    Predicted posterior distributions for interventional and counterfactual probabilities in a two-variable binary SCM experiment with 100 data samples. Dashed vertical lines indicate the corresponding analytical lower and upper bounds (given by $P(x,y) \leq P(y_x) \leq P(x,y) + P(x')$ and $0 \leq P(y_{x},\ y'_{x'}) \leq P(x,y) + P(x', y')$).
    }
    \label{fig:two_node}
\end{figure*}

\section{What do we converge to?}
\label{sec:large_sample}

%In the canonical parameterization of SCMs, the parameter space is the product of simplices $\Theta = \bigtimes_{U \in \U} \Delta_{|\Omega_U|-1}$. This is a complete separable metric space. 

The following result shows that the PFN's predictive distribution converges, as the dataset grows, to the posterior induced by the canonical prior conditional on the observational distribution. 

%Because the canonical prior encodes all possible values of the counterfactual query given the data, PFNs guarantee the inference of bounds on counterfactuals, asymptotically. 

\begin{theorem}[Consistency]
\label{thm:consistency}
    Let $\1D_n$ be a dataset of $n$ i.i.d.\ observations generated from an SCM $M$. As $n \to \infty$, the predictive distribution $q_\omega(\Psi \mid \1D_n)$ of the trained PFN converges weakly to the conditional prior:
    \begin{align}
        q_\omega(\Psi \mid \1D_n) \rightarrow \Pi_{0}(\Psi \mid P_{M}),
    \end{align}
    where $\Pi_0(\Psi \mid P_{M})$ is the prior distribution restricted to the set of SCMs that generate $P_{M}$.
\end{theorem}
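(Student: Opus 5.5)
The proof splits into two limits: a learning step and a statistical step. For the learning step, take the standard PFN fact that the population minimizer of the training objective is the Bayesian posterior predictive. The training objective is the expected negative log-likelihood $\mathbb{E}_{M\sim\Pi_0,\,\1D_n\sim P_M}[-\log q_\omega(\Psi_M\mid \1D_n)]$. This equals, up to a term independent of $\omega$, the expected Kullback--Leibler divergence between $\Pi_0(\Psi\mid\1D_n)$ and $q_\omega(\Psi\mid\1D_n)$. So a trained PFN with sufficient capacity and optimization satisfies $q_\omega(\Psi\mid\1D_n)=\Pi_0(\Psi\mid\1D_n)$ for $\Pi_0$-almost every dataset. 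This is an idealization: we assume the model attains the minimizer, as in \citet{muller2022transformers}. The binning of $[0,1]$ can be handled by refining the bins, or by reading weak convergence on the binned grid. It then suffices to show $\Pi_0(\Psi\mid\1D_n)\Rightarrow\Pi_0(\Psi\mid P_M)$.

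For the statistical step, work in the canonical parameterization. By \citep[Thm. 2.4]{zhang2022partial}, $M$ can be replaced by a canonical SCM with the same counterfactuals, and $\Psi$ and $P_M(\V)$ are both continuous (polynomial) functions of $\boldsymbol\theta$ for each graph $\1G$. The data enter only through the empirical distribution $\hat P_n$. The likelihood of $\1D_n$ under $(\1G,\boldsymbol\theta)$ is $\exp\{-n[\mathrm{KL}(\hat P_n\,\|\,P_{\boldsymbol\theta})+H(\hat P_n)]\}$, so the posterior is a Gibbs measure $\propto \exp\{-n\,\mathrm{KL}(\hat P_n\|P_{\boldsymbol\theta})\}\,d\Pi_0$. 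Since $\V$ is finite and discrete, $\hat P_n\to P_M$ almost surely. A Laplace/Doob-type argument then gives two facts. First, the posterior mass of $\{\boldsymbol\theta:\|P_{\boldsymbol\theta}-P_M\|>\epsilon\}$ vanishes: the KL there is bounded below, while by \Cref{prop:full_support} the prior mass of small neighborhoods of the fiber $\{P_{\boldsymbol\theta}=P_M\}$ is positive. Hence the posterior concentrates on the fiber $\mathcal F=\{(\1G,\boldsymbol\theta):P_{\boldsymbol\theta}=P_M\}$. Push forward through the continuous map $\boldsymbol\theta\mapsto\Psi$ via the continuous mapping theorem. The limit is then the prior disintegrated on $\mathcal F$, which is how I will interpret $\Pi_0(\Psi\mid P_M)$, defined through the regular conditional distribution of $\Pi_0$ given $P_{\boldsymbol\theta}(\V)$.

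The main obstacle is identifying the limit on the fiber, not just proving concentration. The conditional given a measure-zero event is only defined almost everywhere, and the Gibbs weights $e^{-n\mathrm{KL}}$ near $\mathcal F$ could in principle reweight the fiber relative to the Dirichlet disintegration. The reweighting depends on the local curvature of $\boldsymbol\theta\mapsto P_{\boldsymbol\theta}$, that is, the Jacobian or coarea factor. My first approach is Doob's consistency theorem. $P_{\boldsymbol\theta}(\V)$ is a measurable function of the data sequence (the a.s. limit of $\hat P_n$). By the martingale convergence theorem, $\Pi_0(\Psi\in A\mid\1D_n)\to\Pi_0(\Psi\in A\mid\sigma(P_{\boldsymbol\theta}))$ almost surely, for a countable convergence-determining class of sets $A$. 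This yields weak convergence to the regular conditional $\Pi_0(\Psi\mid P_M)$ for $\Pi_0$-almost every generating $M$. That is exactly the conditional prior, and it sidesteps the Jacobian issue. The cost is that the statement holds for $\Pi_0$-almost every $M$ rather than every $M$. I would state this explicitly, and where every-$M$ is desired, supplement it with the Laplace argument plus continuity of the disintegration in $P_M$.
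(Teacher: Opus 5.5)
Your argument is correct and reaches the same $\Pi_0$-almost-every-$M$ conclusion as the paper, but by a different route at the decisive step.

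Both proofs first use PFN optimality to replace $q_\omega$ by the exact posterior. The paper then uses Doob's theorem only to concentrate the posterior of the reduced form $\mu=P_M$. It writes $\3E[h(\Psi)\mid\1D_n]=\int H(\mu)\,dP(\mu\mid\1D_n)$ with $H(\mu)=\int h(\Psi)\,d\Pi_0(M\mid\mu)$. It finishes by asserting that $H$ is continuous because $\Psi$ and $P_M$ are polynomial in $\boldsymbol\theta$.

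You instead apply L\'evy's upward theorem directly to $\Pi_0(\Psi\in A\mid\1D_n)$ over a countable convergence-determining class, and identify the limit with the conditional given $\sigma(P_M)$. This means you never need any regularity of $\mu\mapsto\Pi_0(\cdot\mid\mu)$. That is a genuine gain, because polynomiality does not make the disintegration continuous, and under the Erd\H{o}s--R\'enyi mixture it is not continuous. Take two binary variables:
\begin{itemize}
\item The empty graph has positive prior mass, and its observational image is the independence surface.
\item Every graph with an edge between $X$ and $Y$ induces an absolutely continuous law on the simplex, so it assigns zero mass to that surface.
\item Hence on the surface $\Pi_0(\cdot\mid\mu)$ is the empty-graph conditional, under which $\Psi=P(y_x)$ is the point mass at $P(y)$.
\item At nearby $\mu$ off the surface, $\Pi_0(\cdot\mid\mu)$ keeps a non-degenerate component (from the graph $X\to Y$, $X\leftrightarrow Y$) spread over $[P(x,y),P(x,y)+P(x')]$.
\end{itemize}
The same example shows that the every-$M$ supplement you sketch, via continuity of the disintegration in $P_M$, is not available in general. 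You should rest the theorem on the a.e.\ statement, which is also all the paper's own Doob step delivers.

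Two points to tighten. First, measurability of $P_M$ with respect to $\sigma(X_1,X_2,\dots)$ gives only one inclusion of $\sigma$-fields. To replace $\3E[\,\cdot\mid X_1,X_2,\dots]$ by $\3E[\,\cdot\mid P_M]$ you also need $M\perp(X_1,X_2,\dots)\mid P_M$. This follows from your observation that the likelihood depends on $(\1G,\boldsymbol\theta)$ only through $P_{\boldsymbol\theta}$ (the same fact the paper uses to get $\Pi_0(M\mid\mu,\1D_n)=\Pi_0(M\mid\mu)$); say so explicitly.

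Second, that observation also dissolves your Jacobian worry. The posterior is exactly $\int\Pi_0(\cdot\mid\mu)\,d\Pi(\mu\mid\1D_n)$, so nothing is reweighted within a fiber; the coarea factor already sits inside the disintegration. The only issue is how $\Pi_0(\cdot\mid\mu)$ varies across nearby fibers, which is precisely the continuity of $H$ that the paper assumes and that your martingale argument avoids.
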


With sufficient PFN capacity, training drives the PFN predictive distribution to the exact Bayesian posterior over the causal query. Asymptotically, all remaining uncertainty is precisely partial-identification uncertainty and the support of the predictive distribution converges to the identified set $\mathcal{I}_{\Psi}(P_M)$.%, and thus PFNs guarantee the inference of bounds on counterfactuals, asymptotically

\begin{theorem}[Frequentist Coverage]
\label{thm:coverage}
    Let $\mathcal{C}_{q_{\omega}(\Psi \mid \1D_n), 1-\alpha}$ be a $(1-\alpha)$ Bayesian credible set derived from the PFN $q_\omega(\Psi \mid \1D_n)$. Then, the credible set $\mathcal{C}_{q_{\omega}(\Psi \mid \1D_n), 1-\alpha}$ is an asymptotically valid frequentist confidence set for the identified set:
    \begin{align}
        \lim_{n \to \infty} P_{M} \big( \ \mathcal{I} \subseteq \mathcal{C}_{q_{\omega}(\Psi \mid \1D_n), 1-\alpha} \ \big) = 1 - \alpha.
    \end{align}
\end{theorem}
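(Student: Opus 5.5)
We would prove \Cref{thm:coverage} by reducing it to \Cref{thm:consistency} and then working with the limiting posterior $\Pi_0(\Psi \mid P_M)$. Throughout, we assume the PFN has been trained to optimality, so that $q_\omega(\Psi \mid \1D_n)$ coincides with the Bayesian posterior predictive $\Pi_0(\Psi \mid \1D_n)$ up to an approximation error that vanishes as $n \to \infty$. This is exactly the regime in which \Cref{thm:consistency} applies.

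\emph{Step 1: asymptotic behaviour of the credible set.} First, invoke \Cref{thm:consistency}: under $P_M$, $q_\omega(\Psi \mid \1D_n)$ converges weakly to $\Pi_0(\Psi \mid P_M)$. Take a quantile-based credible set $\mathcal{C}_{q_\omega(\Psi \mid \1D_n),1-\alpha} = [\hat{l}_n, \hat{u}_n]$, where $\hat{l}_n$ and $\hat{u}_n$ are quantiles of the posterior predictive. Weak convergence implies convergence of quantiles at every continuity point of the limiting CDF. Hence $\hat{l}_n$ and $\hat{u}_n$ converge in probability to the corresponding quantiles of $\Pi_0(\Psi \mid P_M)$.

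\emph{Step 2: relate the limit to $\mathcal{I}$.} Next, show that $\operatorname{supp}\Pi_0(\Psi \mid P_M) = \mathcal{I}_\Psi(P_M)$.
\begin{itemize}
\item For the inclusion $\subseteq$: every SCM in the conditioning set reproduces $P_M$, so every value of $\Psi$ it produces lies in $\mathcal{I}$.
\item For the inclusion $\supseteq$: use \Cref{prop:full_support} together with continuity of $\Psi$ in the canonical parameters $\boldsymbol\theta$, since $\Psi$ is a polynomial in $\boldsymbol\theta$. Every point of $\mathcal{I}$ is the value of $\Psi$ at some canonical $\boldsymbol\theta$ in the fibre $\{\boldsymbol\theta : P_{\boldsymbol\theta}(\V) = P_M(\V)\}$. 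Neighbourhoods of that point in the fibre receive positive conditional mass, provided the conditional prior on the fibre (a disintegration of the Dirichlet-product measure) charges relatively open subsets.
\end{itemize}
Because the fibre is a compact semialgebraic set and $\Psi$ is continuous, $\mathcal{I}$ is a closed interval $[l,u]$ when the fibre is connected.

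\emph{Step 3: the coverage probability.} The event $\mathcal{I} \subseteq \mathcal{C}$ is the event $\{\hat{l}_n \le l,\ \hat{u}_n \ge u\}$. To obtain the exact limit $1-\alpha$ rather than merely a bound, we would not use a fixed-quantile credible set. Instead, we would define $\mathcal{C}$ through a $\sqrt{n}$-scaled enlargement of the posterior support, in the spirit of Bayesian set-coverage results for partially identified models (Moon--Schorfheide; Kitagawa et al.). Then:
\begin{enumerate}
\item Apply a Bernstein--von Mises argument to the reduced-form parameter $P(\V)$. The Dirichlet-type posterior on the observational simplex is asymptotically Gaussian around the empirical distribution, matching the frequentist sampling distribution of the empirical distribution.
\item Express the endpoints $l(P)$ and $u(P)$ as value functions of the linear programs in \Cref{def:optimal_bound}, and assume they are directionally differentiable.
\item Apply the delta method. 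The posterior distribution of $(l(P), u(P))$ then matches the frequentist distribution of $(l(\hat{P}_n), u(\hat{P}_n))$. A credible set for the identified set with posterior probability $1-\alpha$ therefore has frequentist coverage tending to $1-\alpha$.
\end{enumerate}

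\emph{Main obstacle.} We expect the hardest part to be Step 3. Matching posterior and sampling uncertainty about the bound endpoints requires Bernstein--von Mises behaviour for $P(\V)$ under the induced prior, which needs a positive continuous density near $P_M$. That density should follow from the Dirichlet prior and the surjectivity of the canonical map. It also requires regularity of the LP value functions $l(\cdot)$ and $u(\cdot)$ at $P_M$. If the value functions fail to be differentiable, for example at degenerate vertices, we would fall back to the weaker statement $\liminf_n P_M(\mathcal{I} \subseteq \mathcal{C}) \ge 1-\alpha$.
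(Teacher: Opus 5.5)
Your Step 3 uses the same mechanism as the paper's proof. The paper applies Kline and Tamer's (2016) Thm.~5 to the reduced form $\mu=P_M(\mathbf{V})$, using multinomial Bernstein--von Mises and the CLT for $\hat\mu_n$ (their Assumptions 3 and 6), plus a smooth-interval condition on the endpoint maps (their Assumption 5).

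The first gap is that the regularity you assume is too weak for the step ``apply the delta method, so the posterior and sampling laws of $(l,u)$ match.'' Suppose $l$ is only directionally differentiable at $\mu^*$. Then $\sqrt n\,(l(\hat\mu_n)-l(\mu^*))\Rightarrow l'_{\mu^*}(Z)$. Conditionally on the data, however, $\sqrt n\,(l(\mu)-l(\hat\mu_n))$ behaves like $l'_{\mu^*}(Z^\ast+Z_n)-l'_{\mu^*}(Z_n)$, where $Z^\ast\sim N(0,\Sigma_0)$ is the posterior fluctuation and $Z_n=\sqrt n(\hat\mu_n-\mu^*)$. These two laws agree only when $l'_{\mu^*}$ is linear. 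This is the known failure of posterior (and bootstrap) inference for non-differentiable functionals. At ties between active linear pieces, as in the Balke--Pearl IV bounds, the resulting sets can under-cover. So your fallback $\liminf_n P_M(\mathcal I\subseteq\mathcal C)\ge 1-\alpha$ is not justified either. You need differentiability of both endpoints at $\mu^*$, which is exactly what the paper imposes as an explicit assumption.

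The second, more serious gap is the link to the PFN, which is what the statement is about. The set you construct in Step 3 is a credible set for the random set $\Gamma(\mu)=[l(\mu),u(\mu)]$ under the posterior of $\mu$. Building it requires that posterior and the maps $l(\cdot),u(\cdot)$. Neither can be recovered from $q_\omega(\Psi\mid\mathcal{D}_n)$, which is a mixture over $\mu$.

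``A $\sqrt n$-enlargement of the posterior support'' is not a substitute. The multinomial likelihood is positive on a dense open set of parameters, so at every finite $n$ the support of the exact predictive equals that of the prior predictive (e.g.\ $[0,1]$ for $P(y_x)$) and never shrinks. The sets in Kline--Tamer and in the robust-Bayes references you cite are instead calibrated from the posterior of $\Gamma(\mu)$. They concentrate at rate $n^{-1/2}$ around the plug-in set $[l(\hat\mu_n),u(\hat\mu_n)]$.

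Your Steps 1--2 actually point the other way. An equal-tailed $(1-\alpha)$ set from $q_\omega$ converges to quantiles of $\Pi_0(\Psi\mid P_M)$. Unless that law has atoms of mass at least $\alpha/2$ at $l$ and $u$, these quantiles lie strictly inside $\mathcal I$. Its coverage of $\mathcal I$ therefore tends to $0$, not to ``merely a bound.''

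The paper closes this step with a one-line appeal to \Cref{thm:consistency}, saying PFN credible sets ``asymptotically coincide'' with the Kline--Tamer sets. Weak convergence of $q_\omega$ to a fixed limit cannot deliver that. For $\alpha>0$ the two limits differ. Moreover, coverage of $\mathcal I$ is governed by $n^{-1/2}$-scale behaviour of the endpoints, which a fixed weak limit does not see.

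As written, your argument establishes coverage for the reduced-form credible set for $\Gamma(\mu)$, not for a credible set derived from $q_\omega(\Psi\mid\mathcal{D}_n)$. You need either to define $\mathcal C$ as that set (and say how $l$ and $u$ are obtained), or to drop the claim that it is read off the PFN predictive.
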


This result extends existing Bayesian-frequentist equivalence results for partially identified models. It implies that the Bayesian credible set for the identified set obtained in \Cref{thm:consistency} achieves asymptotic frequentist coverage. 

The following corollary summarizes the main contribution of the paper.

\begin{corollary}[Universality]
\label{cor:universality}
For any observational dataset $\1D_n$ and counterfactual query $\Psi$, the $100\%$ Bayesian credible interval induced by the PFN $q_\omega(\Psi \mid \1D_n)$, trained on a canonical prior, converges asymptotically to the true identified set:
\begin{align}
    \mathcal{C}_{q_{\omega}(\Psi \mid \1D_n), 1} \xrightarrow{n \to \infty} \mathcal I_{\Psi}(P_M).
\end{align}
\end{corollary}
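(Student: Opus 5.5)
The plan is to combine \Cref{thm:consistency} with \Cref{prop:full_support}. First, identify the limiting support. Then pass support convergence through the weak limit.

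First, note that the $100\%$ credible interval $\mathcal{C}_{q_\omega(\Psi\mid\1D_n),1}$ is by definition the closed convex hull of $\operatorname{supp} q_\omega(\Psi\mid\1D_n)$, i.e.\ the interval $[\inf\operatorname{supp},\sup\operatorname{supp}]$. By \Cref{thm:consistency}, $q_\omega(\Psi\mid\1D_n)$ converges weakly to $\Pi_0(\Psi\mid P_M)$. So the first step is to show that the support of the limit is exactly $\mathcal{I}_\Psi(P_M)$.

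\textbf{Inclusion $\operatorname{supp}\Pi_0(\Psi\mid P_M)\subseteq\mathcal{I}_\Psi(P_M)$.} The conditional prior is concentrated on SCMs $N$ with $P_N(\V)=P_M(\V)$. Every such $N$ is feasible in \Cref{def:optimal_bound}, so $\Psi_N\in[l,u]$. Since $[l,u]$ is closed, the support also lies inside it.

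\textbf{Inclusion $\mathcal{I}_\Psi(P_M)\subseteq\operatorname{supp}\Pi_0(\Psi\mid P_M)$.}
1. Take a value $\psi\in[l,u]$ and an SCM $N$ with $P_N=P_M$ and $\Psi_N$ arbitrarily close to $\psi$. Such an $N$ exists because the feasible set is the image of a connected set of canonical parameters under a polynomial map, with the endpoints attained by compactness.
2. By the canonical representation theorem \citep[Thm. 2.4]{zhang2022partial}, fix a diagram $\1G$ and a canonical parameter $\theta_N$ realizing $N$.
3. Use \Cref{prop:full_support}: every open neighborhood of $\theta_N$ has positive prior mass.
4. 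Intersect such neighborhoods with the fiber $\{\theta: P_\theta(\V)=P_M(\V)\}$. This fiber is an algebraic subset of a product of simplices.
5. $\Psi$ is a polynomial in $\theta$, hence continuous. So each such neighborhood maps into a small interval around $\Psi_N$, and the conditional measure gives that interval positive mass.
6. Hence $\Psi_N\in\operatorname{supp}\Pi_0(\Psi\mid P_M)$, and taking closures gives the inclusion.

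\textbf{Main obstacle.} Step 6 is the delicate point: it requires that the conditional (disintegrated) prior on the measure-zero fiber inherits full support. I would handle this by defining $\Pi_0(\cdot\mid P_M)$ as the limit of the prior conditioned on shrinking neighborhoods $\{\theta:\|P_\theta-P_M\|<\epsilon\}$. Each of these has positive mass by \Cref{prop:full_support}. I would then argue that the pushforward of the prior to $(P_\theta(\V),\Psi_\theta)$ has a locally positive density on the fiber, using the Dirichlet prior's positive density on the simplex interior; boundary cases are handled by the positive-mass graph choices.

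\textbf{Passing to the limit.} Weak convergence alone gives only lower semicontinuity of the support: every point of the limit support is a limit of points in $\operatorname{supp} q_\omega(\Psi\mid\1D_n)$. This yields $\liminf$ inclusion, so the limiting interval eventually covers $\mathcal{I}$. For the reverse direction, I would use that the PFN approximates the exact posterior $\Pi_0(\Psi\mid\1D_n)$. Under the prior, the posterior mass of $\{N:\|P_N-P_M\|>\epsilon\}$ vanishes almost surely by posterior consistency for categorical data (the observational law is a smooth function of $\theta$). Together with continuity of $l,u$ in $P(\V)$ along the feasible polytope, this shows the posterior support shrinks into any neighbourhood of $[l,u]$. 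Interpreting ``converges'' as convergence of the endpoints of the $\inf/\sup$ interval then gives the claim.

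Convergence of the supports, not just of the measures, is the weakest link. Rather than the statement as written, I would state and prove it in the form: for every $\epsilon>0$, eventually the $\epsilon$-neighbourhood of $\mathcal{I}$ captures all posterior mass, and every point of $\mathcal{I}$ receives positive mass.
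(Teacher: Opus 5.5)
Your first half follows the route the paper intends. The paper gives no separate proof of the corollary; it is meant to follow from \Cref{thm:consistency} (weak convergence to $\Pi_0(\Psi\mid P_M)$) together with \Cref{prop:full_support} (so that this limit has support $\mathcal I_\Psi(P_M)$). You are right that going from weak convergence to convergence of supports is the weak link, and the paper makes that jump silently. However, your repair of it fails.

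Posterior consistency, together with continuity of $l,u$, gives only that the posterior mass of $\Psi$ outside the open $\epsilon$-neighbourhood $\mathcal I^\epsilon$ \emph{tends} to zero. The support registers only whether that mass \emph{is} zero, and it never is. Under the Dirichlet prior, $\Pi_0$-almost every $\boldsymbol\theta$ lies in the interior of $\Theta_{\1G}$. There every cell probability $P_{\boldsymbol\theta}(v)$ is strictly positive, because the exogenous configuration selecting constant response functions produces $v$. So the likelihood $\prod_v P_{\boldsymbol\theta}(v)^{n_v}$ is $\Pi_0$-a.s.\ positive, and the exact posterior is mutually absolutely continuous with the prior for every $n$. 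Hence $\supp\Pi_0(\Psi\mid\1D_n)=\supp\Pi_0(\Psi)$ for all $n$. For $\Psi=P(y_x)$ this is all of $[0,1]$, while $\mathcal I=[P(x,y),P(x,y)+P(x')]$.

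The $100\%$ credible interval of the optimal PFN is therefore constant in $n$; the trained softmax histogram puts positive mass on every bin in any case. Under your reading, namely convergence of $\inf/\sup\supp q_\omega(\Psi\mid\1D_n)$ to $l,u$, the statement is false. No argument can close that step, and appealing to continuity of $l,u$ does not help.

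What is true, and what you should prove instead, is your fallback with one correction. The claim is not that $\mathcal I^\epsilon$ eventually captures \emph{all} posterior mass, but that its mass tends to one, together with $\liminf_n q_\omega(O\mid\1D_n)>0$ for every open $O$ meeting $\mathcal I$. Given \Cref{thm:consistency} and your identification $\supp\Pi_0(\Psi\mid P_M)=\mathcal I$, both follow at once from the Portmanteau theorem. For the closed set $F=[0,1]\setminus\mathcal I^\epsilon$ you get $\limsup_n q_\omega(F\mid\1D_n)\le\Pi_0(F\mid P_M)=0$, and for open $O$ you get $\liminf_n q_\omega(O\mid\1D_n)\ge\Pi_0(O\mid P_M)>0$. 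No separate appeal to posterior consistency or to continuity of $l,u$ is needed.

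Equivalently, the corollary holds only in the sense that the $100\%$ credible interval of the \emph{limit} law equals $\mathcal I_\Psi(P_M)$. It also holds for credible levels $1-\alpha_n$ with $\alpha_n\to 0$ sufficiently slowly, but not at level exactly $1$ along the sequence.

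The substantive remaining work is your ``main obstacle'', which the paper also skips by simply asserting that $H(\mu)$ is continuous. Note one point there: if $P_M(v)=0$ for some $v$, the fiber lies in the boundary of \emph{every} $\Theta_{\1G}$. Switching to a different graph therefore does not rescue your positive-density argument in those cases.
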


\Cref{cor:universality} makes precise the claim that the proposed foundation model is a \emph{universal} solver for partial identification.%: the trained model solves the optimal bound problem for any query covered by the canonical prior, as the dataset grows.

%% file: 05_experiments.tex
\section{Experiments}

We demonstrate the use of causal foundation models (CFMs) for partial identification on binary two-variable systems for which analytical bounds are known. Specifically, we consider the inference of $P(y_x)$ and $P(y_{x},\ y'_{x'})$ given samples from the observational distribution $P(x,y)$. For illustration, we plot in \Cref{fig:two_node} posterior distributions from a randomly chosen $P(x,y)$, together with the corresponding tight analytical bounds derived by \citet{manski1990nonparametric} and \citet{tian2000probabilities}. 

% \citet{manski1990nonparametric} showed that a tight analytical bound is given by $P(x,y) \leq P(y_x) \leq P(x,y) + 1 - P(x)$ and \citet{tian2000probabilities} showed that a tight analytical bound is given by $0 \leq P(y_{x=1}=1,\ y_{x=0}=0) \leq P(x,y) + P(x', y')$.

In \Cref{tab:two_var_benchmark}, we systematically compute the coverage, average width of derived intervals for all possible interventional and counterfactual queries, and inference time as a function of the number of context samples $n$, for the proposed method (CFM) as well as the Gibbs sampler proposed by \cite{zhang2022partial}. Given that the Gibbs sampler requires a causal diagram as an input we take the union of the predicted sets under all possible causal diagrams of two variables.

\renewcommand{\arraystretch}{1.15}
\begin{table}
    \centering
    \footnotesize
    \setlength{\tabcolsep}{5pt}
    \begin{tabular}{|lrrrr|}
    \hline
    Model & $n$ & Cov. & Width & Inf. time (ms) \\
    \hline
    Gibbs & 10  & 1.00 & 0.92 & 1053.2 \\
     & 50  & 1.00 & 0.77 & 1095.2 \\
     & 100 & 1.00 & 0.71 & 1146.5 \\
     & 200 & 1.00 & 0.67 & 1322.2 \\
     & 400 & 1.00 & 0.63 & 1521.1 \\
     \hline
    \textbf{CFM} & 10  & 0.99 & 0.70 &  4.2 \\
     & 50  & 1.00 & 0.67 &  7.0 \\
     & 100 & 1.00 & 0.67 & 8.9 \\
     & 200 & 1.00 & 0.64 & 14.5 \\
     & 400 & 1.00 & 0.63 & 24.9 \\
    \hline
    \end{tabular}
    \vspace{2ex}
    \caption{Coverage, mean interval width, and end-to-end inference time across context sizes for all evaluated models, averaged over 100 runs.}
    \label{tab:two_var_benchmark}
    \vspace{-2ex}
\end{table}

%\paragraph{Three variable experiment} Next we consider the computation of the counterfactual probability $P(y_x=1, z_y=1)$ in a three variable system with binary variables $X,Y,Z$ given samples from the observational distribution $P(x,y,z)$.

%% file: 06_conclusion.tex
\section{Conclusion}

We introduced a causal foundation-model approach to partial identification, where a Prior-Data Fitted Network is trained under a canonical prior with full support over discrete structural causal models. This formulation interprets counterfactual bounding as amortized posterior prediction over causal queries, with the potential to enable a single model to handle broad classes of queries, datasets, and structural assumptions without deriving problem-specific bounds each time.

%Our theoretical results show that, with sufficient capacity and data, the PFN recovers the Bayesian posterior predictive, converges to the distribution induced by the identified set, and yields credible sets with asymptotically valid frequentist coverage for the identified set. Together, these results justify PFNs as universal solvers for partial identification: asymptotically, the remaining uncertainty is exactly the irreducible uncertainty from observational equivalence. Empirically, this suggests a practical path toward fast, reusable causal inference under unobserved confounding. Future work should extend this framework to richer conditioning on structural knowledge, mixed observational-interventional regimes, and multi-query inference at larger scales.

%% file: A_related.tex
\section{Related Work}
\label{sec:related_work}

In their seminal work in the early 1990's, \citet{manski1990nonparametric,robins2000sensitivity} showed that the effect of an intervention may always be bounded in a non-trivial interval (\textit{i.e.}, probabilities strictly contained in $[0,1]$), irrespective of unobserved confounders or the causal structure underlying the variables involved. Causal effects are therefore in general said to be \textit{partially identifiable}, \textit{i.e.}, one can derive bounds that shrink the range of a priori possible values for a given effect and potentially serve as a useful support for decision-making.

Starting from this insight, the problem of partial identification has been attracting growing attention in the literature. To improve upon these bounds, one common restriction on the data generating mechanism is to assume knowledge of a causal graph describing the phenomenon of interest. Several results have derived progressively tighter bounds by exploiting the independencies in observational and interventional distributions implied by the causal graph \cite{balke1997bounds,bellot2023towards,sachs2026deriving,zhang2020designing,zhang2021bounding,zhang2022partial,bellot2024towards}. For instance, Balke and Pearl \cite{balke1997bounds} (and subsequent refinements, \textit{e.g.}, \cite{zhang2022partial}) defined polynomial optimization programs to derive bounds that are provably optimal given a causal diagram. More recent proposals consider parameterizations in the space of linear combinations of a set of fixed basis functions \cite{padh2023stochastic} and neural networks \cite{balazadeh2022partial,hu2021generative}. Analytical bounds (rather than numerical approximations) have also been derived in selected settings, such as for the "instrumental variable" graph \cite{zhang2021bounding} and discrete systems with general graphs or equivalence classes \cite{bellot2023towards,zhang2020designing}. These results are all based on the assumption that the causal diagram (or equivalence class) is known exactly. They might be applied to the data-driven (graph-free) problem of partial identification by considering the union of the predicted sets under all possible causal diagrams of two variables (which, however, is difficult to scale to larger systems in practice).

In parallel, a number of works have adopted sensitivity assumptions (as an alternative or in combination with a causal diagram) that quantify the degree of unobserved confounding through various data statistics, such as odds ratios, propensity scores, etc. A rich literature on sensitivity assumptions exists, including Tan's sensitivity model \cite{tan2006distributional} and Rosenbaum's sensitivity model \cite{rosenbaum2010design}. These approaches start with estimators that optimize the average or conditional treatment effect bounds subject to constraints implied by the sensitivity model. %Several of these works leverage Neyman orthogonality techniques to obtain rate guarantees and doubly-robustness properties. For instance, \cite{jesson2021quantifying,kallus2019interval,oprescu2023b} study the estimation of bounds under Tan's sensitivity model which quantifies the degree of unobserved confounding through odds ratios, and propose estimators with various validity, sharpness, and favourable convergence rate guarantees. In contrast, \cite{yadlowsky2018bounds} study the estimation of bounds under Rosenbaum's sensitivity model also deriving estimators with sharpness and fast convergence guarantees.

While some of these works provide general methods for bounding the value of causal effects and counterfactuals, the approach we take in this paper is distinct. Rather than deriving problem-specific bounds via optimization or sensitivity-constrained estimation for each new dataset and query, we amortize partial identification through a Prior-Data Fitted Network trained once on synthetic examples from a \emph{canonical prior} with full support over all structural causal models with discrete observables \citep{zhang2022partial}. This also distinguishes our approach from existing causal foundation models \citep{robertson2025dopfn,balazadeh2025causalpfn,dhir2025mace,reuter2026partial,ma2025causalfm}, which target point-identified causal effects or place priors over restricted families of SCMs that need not assign mass to every counterfactual value compatible with the data; by construction, the support of our posterior predictive distribution asymptotically recovers the identified set, yielding bounds that are both valid and tight in the large-sample limit.

%% file: A_proofs.tex
\section{Proofs}
\label{app:proofs}

\textbf{\Cref{prop:full_support} restated.} The canonical prior $\Pi_0$ in \Cref{eq:canonical_prior} has full support over $\3M$: for any SCM $M \in \3M$ and any open neighborhood $\mathcal{U}$ of $M$ in $\3M$, $\Pi_0(\mathcal{U}) > 0$.

\begin{proof}
Fix an arbitrary SCM $M \in \3M$ and any open neighborhood $\mathcal{U}$ of $M$.
Write $M=(\1G,\boldsymbol{\theta})$, where $\1G$ is the semi-Markov graph and $\boldsymbol{\theta}$ are canonical parameters.
Under \Cref{eq:canonical_prior}, the prior factorizes as
\[
\Pi_0(\1G,\boldsymbol{\theta})=\Pi_0(\1G)\Pi_0(\boldsymbol{\theta}\mid \1G).
\]

First, $\Pi_0(\1G)>0$ for every semi-Markov graph $\1G$ on $\V$. Indeed, the directed-edge and bidirected-edge parts are independent Bernoulli draws with probabilities $q_D,q_B\in(0,1)$, so any finite edge pattern has strictly positive probability.

Second, conditional on $\1G$, the parameter prior is
\[
\Pi_0(\boldsymbol{\theta}\mid \1G)=\prod_{U\in\U}\mathrm{Dir}(\boldsymbol{\theta}_U;\alpha\mathbf 1),\qquad \alpha>0.
\]
Each Dirichlet density is strictly positive on the relative interior of its simplex, hence has full support on that simplex; therefore the product prior has full support on
\[
\Theta_{\1G}=\prod_{U\in\U}\Delta_{|\supp_U|-1}.
\]
So every non-empty open set in $\Theta_{\1G}$ has positive $\Pi_0(\cdot\mid \1G)$ mass.

By \cite{zhang2022partial}, canonical SCMs are representation-complete for discrete observables, and the map from canonical parameters to SCM-induced distributions/queries is continuous (polynomial in $\boldsymbol{\theta}$). Hence any open neighborhood $\mathcal U$ of $M$ induces a non-empty open parameter neighborhood around $\boldsymbol{\theta}$ within the graph component $\1G$, so
\[
\Pi_0(\mathcal U\mid \1G)>0.
\]
Combining with $\Pi_0(\1G)>0$ gives
\[
\Pi_0(\mathcal U)\ge \Pi_0(\1G)\Pi_0(\mathcal U\mid \1G)>0.
\]
Since $M$ and $\mathcal U$ were arbitrary, $\Pi_0$ has full support on $\3M$.
\end{proof}

\textbf{\Cref{thm:consistency} restated.} Let $\1D_n$ be a dataset of $n$ i.i.d.\ observations generated from an SCM $M$. As $n \to \infty$, the predictive distribution $q_\omega(\Psi \mid \1D_n)$ of the trained PFN converges weakly to the conditional prior distribution of the causal effect $\Psi$ given the true observational distribution $P_{M}$:
\begin{align}
    q_\omega(\Psi \mid \1D_n) \xrightarrow{w} \Pi_{0}(\Psi \mid P_{M}),
\end{align}
where $\Pi_0(\Psi \mid P_{M})$ is the prior distribution restricted to the set of SCMs that generate the same observational distribution $P_{M}$.

\begin{proof}
The training objective for the PFN is the minimization of the expected KL divergence between the true posterior and the approximation:
\begin{align}
    \omega^* = \operatorname{arg\,min}_\omega \; \3E_{M \sim \Pi_0} \, \3E_{\1D_n \sim P_M} \big[\mathrm{KL}\big(P(\Psi \mid \1D_n) \,\|\, q_\omega(\Psi \mid \1D_n)\big)\big].
\end{align}
As shown in \citet{muller2022transformers} (Corollary 1.2), for a model with universal approximation capacity, the global optimum $q_{\omega^*}$ satisfies $q_{\omega^*}(\Psi \mid \1D_n) = P(\Psi \mid \1D_n)$ for all $\1D_n$ for which the posterior is defined. Thus, the consistency of the PFN reduces to the consistency of the Bayesian posterior itself.

As $n \to \infty$, we aim to show that once the empirical distribution of the data converges to the true observational distribution $P_{M}$, which is point-identified, the remaining uncertainty characterizes the partially-identified bounds for the counterfactual query exactly. 

We start by applying Doob's theorem \citep{doob1949application} to the point-identified observational distribution $\mu = \phi(M) = P_M$. Following the recent treatment presented in \citep[Thms. 2.1 and 2.2]{miller2018detailed}, we verify a number of regularity conditions:
\begin{itemize}
    \item \emph{Polish parameter space.} With the joint prior, the full parameter space is the disjoint union $\bigsqcup_{\1G} \{\1G\} \times \Theta_{\1G}$, where $\Theta_{\1G} = \bigtimes_{U \in \U} \Delta_{|\Omega_U|-1}$ is the simplex product for a fixed diagram $\1G$. Since $p = |\V|$ is fixed there are finitely many semi-Markov diagrams on $p$ nodes, each $\Theta_{\1G}$ is a compact metric space, and a finite disjoint union of compact metric spaces is again compact and separable, hence a complete separable metric space.
    \item \emph{Measurability of $M \mapsto P_M(A)$.} In the canonical SCM, $P_M(\V)$ is a polynomial function of the latent probabilities $\{P(u)\}_{u \in \Omega_U}$. Polynomials are continuous, and continuous functions on Polish spaces are Borel measurable.
    \item \emph{Identifiability of the observational distribution.} The joint distribution $\mu = P_M(\V)$ is point-identified from the empirical frequencies in $\1D_n$; the mapping $M \mapsto P_M$ is many-to-one (multiple SCMs may induce the same $\mu$), but $\mu$ itself is identifiable.
    \item \emph{$L^1$ integrability.} The counterfactual query $\Psi_M = \sum_\u \I\{\Y_\x(\u) = \y_\x, \dots, \W_\z(\u) = \w_\z\} P(\u)$ is bounded in $[0,1]$, hence $\Psi \in L^1(\Pi_0)$.
\end{itemize}
Since $\mu$ is point-identified, Doob's theorem guarantees that for $\Pi_0$-almost all $M$, the posterior over the observational distribution concentrates:
\begin{align}
    P(\mu \mid \1D_n) \xrightarrow{w} \delta_{P_{M}} \quad \text{as } n \to \infty.
\end{align}

While the posterior over the observational distribution concentrates, the posterior over the counterfactual query does not. To see this more precisely, we decompose the prior via the Disintegration Theorem. Let $\phi: \3M \to \Delta(\V)$ map each SCM to its observational distribution. The prior $\Pi_0$ decomposes as:
\begin{align}
    d\Pi_0(M) = d\Pi_0(M \mid \mu) \, d\Pi_\mu(\mu),
\end{align}
where $\Pi_\mu$ is the marginal over observational distributions and $\{\Pi_0(\cdot \mid \mu)\}_\mu$ is the family of conditional priors on the set of SCMs $\phi^{-1}(\mu)$. The Disintegration Theorem guarantees that this decomposition is unique and that the map $\mu \mapsto \Pi_0(\cdot \mid \mu)$ is measurable.

Let $h: \3R \to \3R$ be any bounded continuous test function. By the tower property:
    \begin{align}
        \3E[h(\Psi) \mid \1D_n] &= \int h(\Psi) \ d\Pi_0(M\mid \1D_n)\\
        &= \int_\mu \left(\int_{M \in \phi^{-1}(\mu)} h(\Psi) \ d\Pi_0(M\mid \mu, \1D_n)\right) \ d\Pi_0(\mu\mid \1D_n)\\
        &= \int_{\mu} \underbrace{\left(\int_{\phi^{-1}(\mu)} h(\Psi(M)) \, d\Pi_0(M \mid \mu)\right)}_{=: H(\mu)} dP(\mu \mid \1D_n).
    \end{align}
The second equality follows from the Disintegration Theorem. For the third equality,
in our setup the likelihood factorizes through the observational distribution $\mu: L(\1D_n \mid M) = L(\1D_n \mid \mu)$.
So if $M_1, M_2 \in \phi^{-1}(\mu)$, then $L(\1D_n \mid M_1) = L(\1D_n \mid M_2) = L(\1D_n \mid \mu)$. Bayes' rule updates relative weights by prior times likelihood. If likelihood is identical for all $M \in \phi^{-1}(\mu)$, relative weights stay exactly as in the prior:
\begin{align}
\Pi_0(M \mid \mu,\1D_n) \propto \Pi_0(M \mid \mu) \times L(\1D_n \mid \mu).
\end{align}
The factor $L(\1D_n \mid \mu)$ is constant in $M$, so it cancels in normalization. Hence
\begin{align}
\Pi_0(M \mid \mu,\1D_n) = \Pi_0(M \mid \mu), \quad \text{for all } n.
\end{align}

Finally, since all observational and counterfactual quantities are polynomial functions of $\theta \in \Theta$ and $\Psi(M)$ is continuous, $H(\mu)$ is a continuous function of $\mu$. Applying the Continuous Mapping Theorem to the observational distribution (here denoted $P_{M^*}$) concentration result:
\begin{align}
    \int H(\mu) \, dP(\mu \mid \1D_n) \longrightarrow H(P_{M^*}) = \int_{\phi^{-1}(P_{M^*})} h(\Psi(M)) \, d\Pi_0(M \mid P_{M^*}).
\end{align}
Since this convergence holds for every bounded continuous $h$, the Portmanteau theorem implies:
    \begin{align}
        P(\Psi \mid \1D_n) \xrightarrow{w} \Pi_0(\Psi \mid P_{M^*}).
    \end{align}
Combining this with the optimality guarantee of the PFN, we conclude $q_\omega(\Psi \mid \1D_n) \xrightarrow{w} \Pi_0(\Psi \mid P_{M^*})$.

%\paragraph{Remark on the role of the Disintegration Theorem.}
%The conditional prior $\Pi_0(\cdot \mid \mu)$ involves conditioning on a set of probability zero (a single observational distribution $\mu^*$ in a continuous space). Elementary Bayes' rule is undefined in this case. The Disintegration Theorem provides the formal tool for ``slicing'' the measure $\Pi_0$ along the fibers of $\phi$, defining the conditional measure $\Pi_0(\cdot \mid \mu)$ for almost every $\mu$ in a unique, coordinate-invariant way---avoiding the Borel-Kolmogorov paradox. It also ensures that the map $\mu \mapsto \Pi_0(\cdot \mid \mu)$ is measurable, which is required for the Continuous Mapping Theorem argument in Step 4 to go through.
\end{proof}

\textbf{\Cref{thm:coverage} restated.}  Let $\mathcal{C}_{q_{\omega}(\Psi \mid \1D_n), 1-\alpha}$ be a $(1-\alpha)$ Bayesian credible set derived from the PFN $q_\omega(\Psi \mid \1D_n)$. Then, the credible set $\mathcal{C}_{q_{\omega}(\Psi \mid \1D_n), 1-\alpha}$ is an asymptotically valid frequentist confidence set for the identified set:
\begin{align}
    \lim_{n \to \infty} P_{M} \big( \ \mathcal{I} \subseteq \mathcal{C}_{q_{\omega}(\Psi \mid \1D_n), 1-\alpha} \ \big) = 1 - \alpha.
\end{align}

\begin{proof}
The proof is an application of \citet[Thm. 5]{klinetamer2016} to our set-up. For this we verify the conditions of their theorem: Assumptions 1, 3, 5, 6 in \citep{klinetamer2016}.

Let $\mu := P_M(\V)$ be the so-called reduced-form parameter (cell probabilities of the observational law) in \citep{klinetamer2016}, and define the identified-set map
\[
\Gamma(\mu):=\mathcal I(\mu)=[\ell(\mu),r(\mu)],
\]
where $\ell(\mu),r(\mu)$ are the optimal values of the optimization program in \Cref{eq:counterfactual_bounds}. Thus our model is exactly a map from a point-identified reduced form $\mu$ to an identified set for the partially identified target.

The parameter space for $\mu$ is a simplex (hence a Borel subset of a finite-dimensional Euclidean space), so Assumption 1 in \citet{klinetamer2016} holds. Since observations are i.i.d.\ categorical, the reduced-form posterior satisfies a Bernstein--von Mises limit and the frequentist CLT for $\hat\mu_n$ holds:
\[
\sqrt n(\mu-\hat\mu_n)\mid \1D_n \Rightarrow N(0,\Sigma_0),
\qquad
\sqrt n(\hat\mu_n-\mu^*) \Rightarrow N(0,\Sigma_0),
\]
which match Assumptions 3 and 6 in \citet{klinetamer2016}. For Assumption 5 (their asymptotic independence condition), we impose the standard smooth-interval sufficient condition from \citet[Remark 5 and Lemma 1]{klinetamer2016}: near $\mu^*$, $\Gamma(\mu)$ is nonempty and the endpoint maps $\ell(\mu),r(\mu)$ are regular enough for the (Bayesian and frequentist) delta method.

Let $\mathcal I_{\Pi_0(\Psi \mid \1D_n), 1-\alpha}$ denote a $(1-\alpha)$ Bayesian credible set for the identified set, i.e.
\[
\Pi \big( \ \Gamma(\mu)\subseteq \mathcal I_{\Pi_0(\Psi \mid \1D_n), 1-\alpha}\mid \1D_n \ \big)=1-\alpha.
\]
By \citet[Thm. 5]{klinetamer2016},
\[
P_{M^*}\!\big( \ \Gamma(\mu^*)\subseteq \mathcal I_{\Pi_0(\Psi \mid \1D_n), 1-\alpha} \ \big)\to 1-\alpha.
\]
Since $\Gamma(\mu^*)=\mathcal I(P_{M^*})$, this is the desired frequentist coverage statement. Finally, because \Cref{thm:consistency} gives $q_\omega(\Psi\mid \1D_n)\Rightarrow \Pi_0(\Psi\mid P_{M^*})$ where $M^*$ is the true SCM: the PFN-based credible sets asymptotically coincide with the corresponding Bayesian credible sets for the identified set and inherit the same limit coverage.
\end{proof}

\paragraph{Remark.} If the causal effect happens to be point-identified (e.g., under unconfoundedness and known causal diagram), the identified set collapses to a point and \Cref{thm:coverage} reduces to the standard frequentist coverage guarantee for a point-identified parameter.

\paragraph{Remark.} Compared with recent amortized causal PFN theory \citep{balazadeh2025causalpfn, ma2025causalfm}, our guarantee targets a different object: under partial identification, the PFN predictive converges weakly to the \emph{conditional prior} $\Pi_0(\Psi \mid P_M)$ on the collection of SCMs compatible with the true observational law, whereas they study identified causal functionals and aim for point recovery.
The proof strategy is similar: all use Doob-type concentration on the observational parameter and PFN optimality \citep{muller2022transformers} to reduce the network to the exact Bayesian predictive; though an important difference is that \citeauthor{balazadeh2025causalpfn} and \citeauthor{ma2025causalfm} impose identification restrictions on the prior and assume that the map $M \mapsto P_M$ is measurable, we explicitly prove and verify that this is true under the canonical SCM prior for discrete systems that we propose.

%% file: A_additional.tex
\section{Additional Experiments}
\label{app:additional}

\begin{figure}
    \centering
    \includegraphics[width=1\linewidth]{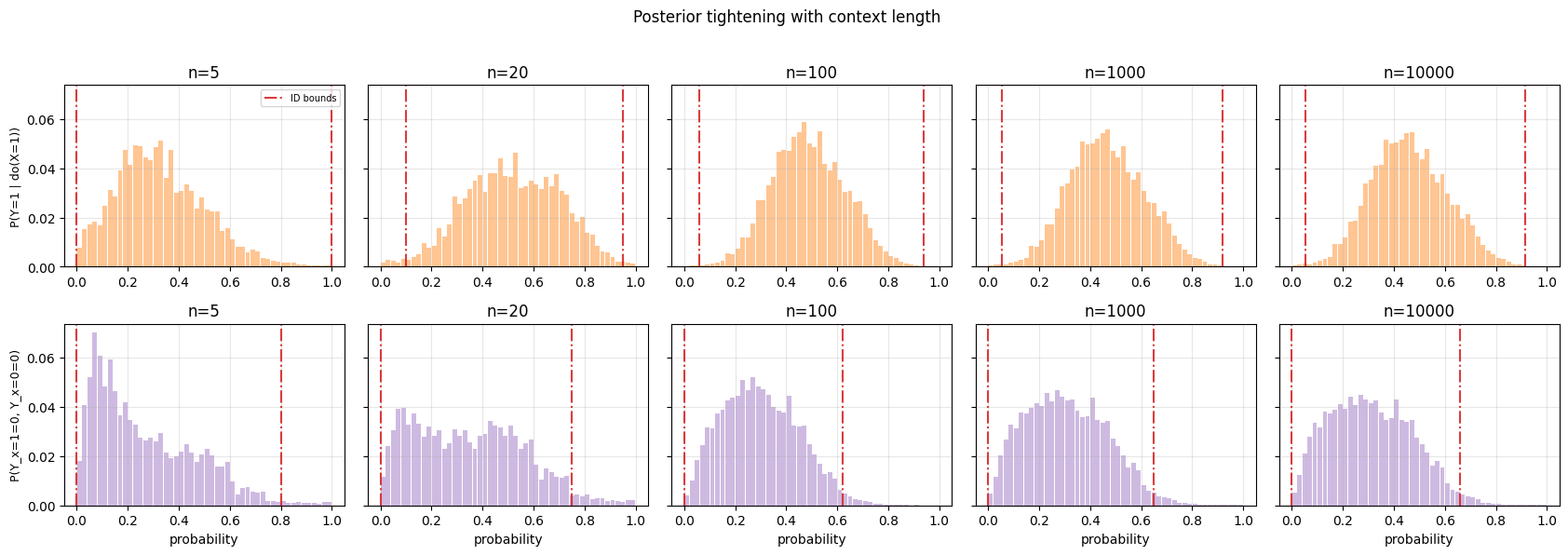}
    \caption{Predicted posterior for the interventional query $P(Y_{x=1}=1)$ and the counterfactual query $P(Y_{x=1}=0, Y_{x=0}=0)$ as a function of the number of context samples $n$ for a randomly drawn SCM.}
    \label{fig:posterior_sample_size}
\end{figure}

In this section, we consider additional experiments to evaluate the performance of the CFM. 

We start by analyzing the predicted posterior distributions for the interventional and counterfactual queries as a function of the number of context samples $n$ for a randomly drawn SCM. \Cref{fig:posterior_sample_size} shows the predicted posterior distributions for the interventional query $P(Y_{x=1}=1)$ and the counterfactual query $P(Y_{x=1}=0, Y_{x=0}=0)$ as a function of the number of context samples $n$. We see that the posterior distributions approximately concentrate on the identified set as the number of context samples increases.

We also evaluate the performance of the CFM on a three variable system with binary variables $X,Y,Z$ given samples from the observational distribution $P(x,y,z)$. To make comparisons with existing partial identification methods, we will restrict the prior to sampling SCMs with a fixed causal diagram $\{Z \to X \to Y, X \leftrightarrow Y\}$: the instrumental variable graph. We generate $n$ samples from a randomly chosen $P(x,y,z)$ and compute the coverage, average width of the derived interval, and inference time as a function of the number of context samples $n$. We evaluate 95 percent credible intervals. The results are shown in \Cref{tab:three_var_benchmark}.

\renewcommand{\arraystretch}{1.15}
\begin{table}[H]
    \centering
    \footnotesize
    \setlength{\tabcolsep}{5pt}
    \begin{tabular}{|lrrrr|}
    \hline
    Model & $n$ & Cov. & Width & Inf. time (ms) \\
    \hline
    Gibbs & 10  & 1.000 & 0.854757 & 822.981473 \\
    & 20  & 1.000 & 0.790407 & 839.420002 \\
    & 50  & 1.000 & 0.706258 & 842.181667 \\
    & 100 & 1.000 & 0.660208 & 822.117917 \\
    & 200 & 1.000 & 0.623506 & 951.234029 \\
    & 400 & 1.000 & 0.601055 & 1062.213033 \\
    \hline
    \textbf{CFM}  & 10  & 0.900 & 0.549325 & 2.481694 \\
    & 20  & 0.950 & 0.535600 & 2.994190 \\
    & 50  & 0.950 & 0.471782 & 4.435860 \\
    & 100 & 0.975 & 0.471999 & 5.642881 \\
    & 200 & 0.970 & 0.472965 & 9.045406 \\
    & 400 & 0.985 & 0.467423 & 17.795929 \\
    \hline
    \end{tabular}
    \vspace{2ex}
    \caption{Coverage, mean interval width, and end-to-end inference time across context sizes for all evaluated models, averaged over 100 runs.}
    \label{tab:three_var_benchmark}
    \vspace{-2ex}
\end{table}

\begin{figure*}[t]
    \centering
    \includegraphics[width=0.9\linewidth]{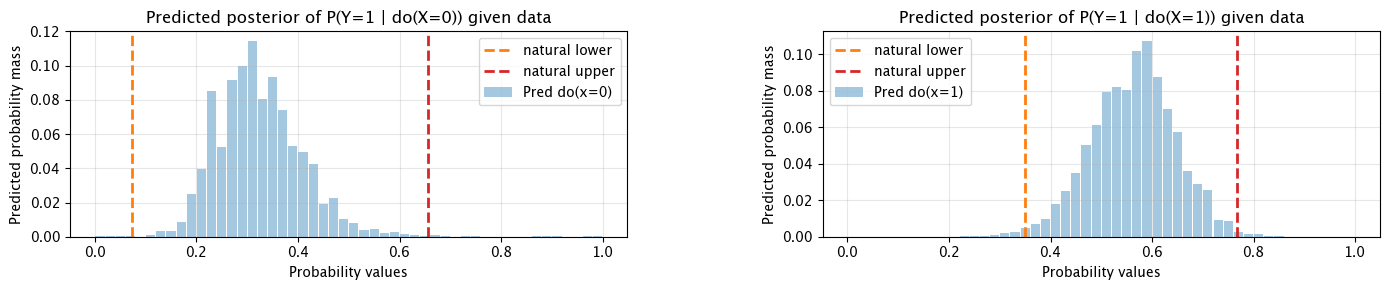}
    \caption{
    Predicted posterior distributions for $\Psi:=P(Y_{x}=1)$ at $x=0$ (left) and $x=1$ (right) in the three-variable binary SCM experiment. Dashed vertical lines indicate the corresponding analytical lower and upper bounds \cite{balke1994bounds}.
    }
    \label{fig:three_node}
\end{figure*}

%% file: A_architecture.tex
\section{Architecture and Training Details}
\label{app:architecture}

This appendix gives a complete description of the neural network architecture and training procedure used in the experiments.

\subsection{Architecture}

The model $q_\omega(\Psi \mid \mathcal{D})$ is implemented as a Neural Process composed of three components: a context/query encoder, an outcome predictor head, and a loss function. A single forward path handles both interventional and counterfactual queries via a unified query specification.

% Each observational row in the context set $\mathcal{D}$ is a discrete vector over $p$ observed nodes. Every node is assigned a role---ordinary variable, outcome node, or intervention node---and three learned embedding tables for the context split map discrete node values to $\mathbb{R}^{d}$, producing a context tensor of shape $[B, N_c, p, d]$, where $B$ is the batch size, $N_c$ is the number of context observations, and $d$ is the model dimension.

\paragraph{Embedding.} Each training episode pairs the shared context $\mathcal{D}$ with a batch of $N_q$ queries. Query $q$ targets a counterfactual probability $\Psi_q = P_M(\y_{\x}, \ldots, \z_{\w})$ that is represented as a conjunction of potential-outcome events (sharing the same latent draw $\u \sim P(\u)$). A counterfactual probability is encoded as a 2-d array capturing the index of the node that is intervened upon, its value, and the index and value of the outcome node. For example, $P(Y_{X=1} = 1,\, Y_{X=0} = 0)$ in a two variable system is encoded as:
\begin{center}
\begin{tabular}{ccccc}
\textbf{event} & \textbf{int. node} & \textbf{int. val} & \textbf{out. node} & \textbf{out. val} \\
\hline
0 & 0 ($x$) & 1 & 1 ($y$) & 1 \\
1 & 0 ($x$) & 0 & 1 ($y$) & 0 \\
\end{tabular}
\end{center}
For queries with less than a pre-specified maximum number of events $K$, the remaining events are padded. 

Each query is embedded into a token of shape $[\text{num nodes}, \text{model dimension}]$. Each node embedding varies depending on the role (intervention or outcome variable), value, and event index, that is assigned to a node in the query. Each type of embedding is learned separately and the final embedding is the sum of the contributions from all the nodes in the query. For example, for the query $P(Y_{X=1} = 1,\, Y_{X=0} = 0)$ in a two variable system, the embedding of node $X$ is given by the sum of the intervention embedding for $X=1$ in the first event and the outcome embedding for $X=0$ in the second event.

The benefit of this approach is that a query node $j$ attends to context node $j$, giving a clear inductive bias for reading evidence about the variables involved in the query, and is efficient as complex queries can be summarized in a single token while keeping track of the events they are involved in. However, it does have the drawback of summing multiple events into the same node slots, which can blur distinctions when events overlap on the same variables.

\paragraph{Encoder.}
The encoder concatenates context and query tokens along the sample axis and processes a batch through $L$ alternating attention blocks. Each block applies two sub-layers in sequence. Sample attention operates across tokens for each node independently, with a causal mask that enforces query tokens attend only to context tokens (and not to each other); Node attention operates across nodes within each token. Both sub-layers use pre-norm residual connections ($\text{LayerNorm} \to \text{MHA} \to \text{add}$) followed by a position-wise feed-forward network with a SwiGLU activation. The feed-forward hidden dimension is $4d$. Dropout is applied after each activation. After $L$ such blocks, only the query slice of the output is retained, yielding a representation tensor of shape $[B, N_q, \text{num nodes}, \text{model dimension}]$.

\subsection{Loss}

Two quantities are extracted from the encoder output for each query: an outcome and an intervention summary. The outcome representation is obtained by averaging the representation vectors at the outcome-node positions across the query's $K$ active worlds. The intervention summary is the mean intervention value across events. These are concatenated to form a predictor input which is fed to a two-layer MLP with SwiGLU activations followed by a linear projection to $B$ logits. The $B$ bins partition $[0,1]$ into equal-width intervals so that the predicted probability of each bin defines a histogram approximation to the predictive distribution over $\Psi \in [0,1]$. The model's point prediction is the expectation under this histogram.

Given a training example $(\mathcal{D}, \{\Psi_q\}_{q=1}^{N_q})$ where each $\Psi_q \in [0,1]$ is the ground-truth probability of the corresponding query, the per-query training loss is a cross-entropy over bins:
\[
    \ell(\hat{p}_q, \Psi_q) = -\log \hat{p}_{q,b_q^*}, \qquad b_q^* = \min\!\left(\lfloor \Psi_q \cdot B \rfloor,\, B-1\right).
\]
This corresponds to assigning the ground-truth probability $\Psi_q$ to its enclosing bin and maximising the predicted mass on that bin. The loss is averaged over all valid queries in the batch.

\subsection{Training Procedure}

Each training step draws a fresh batch of episodes. An episode consists of a context set of $N_c$ i.i.d.\ observations sampled from the observational distribution $P_M(\mathcal{V})$ of a randomly drawn canonical SCM $M \sim \Pi_0$, together with $N_q$ structured queries against that shared context, each paired with its ground-truth counterfactual probability $\Psi_q(M) = P_M(\y_{\x}, \ldots, \z_{\w})$. Context size $N_c$ is sampled uniformly at random from $[N_{\min}, N_{\max}]$ at each step, exposing the model to varying amounts of observational evidence during training.
 
Table~\ref{tab:hyperparameters} summarises the hyperparameters used in the experiments.

\begin{table}[h]
\centering
\caption{Model and training hyperparameters.}
\label{tab:hyperparameters}
\begin{tabular}{lll}
\toprule
\textbf{Parameter} & \textbf{Value} & \textbf{Description} \\
\midrule
$d$ & $64$ & Model (embedding) dimension \\
$L$ & $3$ & Number of alternating attention blocks \\
$H$ & $4$ & Number of attention heads \\
$B$ & $50$ & Number of histogram bins \\
MLP depth & $2$ & Depth of predictor MLP \\
FF dim & $4d = 256$ & Feed-forward hidden dimension \\
Dropout & $0.0$ & Dropout probability \\
\midrule
Batch size & $16$ & Episodes per gradient step \\
$N_q$ & $250$ & Queries per episode \\
$W_{\max}$ & $2$ & Maximum counterfactual atoms (worlds) per query \\
$N_{\min}$ & $5$ & Minimum context size \\
$N_{\max}$ & $5000$ & Maximum context size \\
Steps & $3000$ & Total training steps \\
Learning rate & $10^{-3}$ & Peak learning rate (AdamW) \\
Weight decay & $0$ & AdamW weight decay \\
Warmup fraction & $0.2$ & Fraction of steps for linear warmup \\
LR schedule & cosine & Cosine decay after warmup \\
Gradient clip & $1.0$ & Max gradient norm \\
\bottomrule
\end{tabular}
\end{table}